\newtheorem{theorem}{Theorem}
\theoremstyle{definition}
\newtheorem{definition}{Definition}
\newtheorem{remark}{Remark}
\theoremstyle{definition}
\theoremstyle{definition}
\newcommand{\R}{\mathbb{R}}
\definecolor{darkblue}{RGB}{0,0,102}
\definecolor{lightblue}{RGB}{77,77,148}
\definecolor{gold}{RGB}{234, 170, 0}
\definecolor{metallic_gold}{RGB}{139, 111, 78}
\newcommand{\mb}[1]{\mathbf{ #1 }}
\newcommand{\bs}[1]{\boldsymbol{ #1 }}
\DeclareMathOperator*{\argmin}{argmin}
\DeclareMathOperator*{\argmax}{argmax}
\newcommand{\lmat}{\begin{bmatrix}}
\newcommand{\rmat}{\end{bmatrix}}
\newcommand{\lcase}{\begin{cases}}
\newcommand{\rcase}{\end{cases}}
\title{\LARGE \bf Learning Responsibility Allocations for Safe Human-Robot Interaction with Applications to Autonomous Driving
}
\author{Ryan K. Cosner, Yuxiao Chen, Karen Leung, and Marco Pavone
\thanks{ 
        Ryan K. Cosner is with the California Institute of Technology, {\tt\small rkcosner@caltech.edu}, 
        Yuxiao Chen is with NVIDIA, {\tt\small yuxiaoc@nvidia.com},
        Karen Leung is with the University of Washington and NVIDIA {\tt\small \{kymleung@uw.edu, kaleung@nvidia.com\}},
        Marco Pavone is with Stanford University and NVIDIA {\tt\small \{pavone@stanford.edu, mpavone@nvidia.com\}}
        Ryan's work was performed during an internship with NVIDIA.
}
}
\begin{document}
\maketitle
\thispagestyle{empty}
\pagestyle{empty}

\begin{abstract}

Drivers have a responsibility to exercise reasonable care to avoid collision with other road users. This assumed responsibility allows interacting agents to maintain safety without explicit coordination.
Thus to enable safe autonomous vehicle (AV) interactions, AVs must understand what their responsibilities are to maintain safety and how they affect the safety of nearby agents.
In this work we seek to understand how responsibility is shared in multi-agent settings where an autonomous agent is interacting with human counterparts. 
We introduce \textit{Responsibility-Aware Control Barrier Functions} (RA-CBFs) and present a method to learn responsibility allocations from data. By combining safety-critical control and learning-based techniques, RA-CBFs allow us to account for scene-dependent responsibility allocations and synthesize safe and efficient driving behaviors without making worst-case assumptions that typically result in overly-conservative behaviors. We test our framework using real-world driving data and demonstrate its efficacy as a tool for both safe control and forensic analysis of unsafe driving. 

\end{abstract}


\section{Introduction}
Drivers have a duty to exercise reasonable care when interacting with other road users. The assumption that other road users will exercise \textit{responsible} behaviors enables everyone to maintain safety without explicit coordination. 
Thus, for autonomous vehicles (AVs) to safely and seamlessly interact with other road users, they must utilize a safety framework that can interpretably codify basic safe driving behaviors (such as those laid out by the IEEE safe driving standards \cite{ieee2022P2846}), and exercise duty of care, i.e., drive responsibly.
However, the responsibility for ensuring safety is shared asymmetrically and the allocation is influenced by context-dependent social driving norms. For example, drivers are generally more responsible for avoiding collisions with vehicles in front of them than behind them (see Figure~\ref{fig:hero_figure}).

The goal of this work is to develop a rigorous yet flexible safety framework that is capable of codifying and synthesizing safe and responsible driving behaviors \cite{ieee2022P2846}.
Unfortunately, most existing models and techniques for ensuring AV safety tend to make strong assumptions about how other drivers behave which often results in defensive or erratic driving. For instance, many approaches make worst-case assumptions about the other agents' behaviors \cite{bansal_hamilton-jacobi_2017, nister2019safety, usevitch2022adversarial, shalev2017formal}; while these assumptions allow for strong theoretical guarantees, they are often impractical as they result in infeasible planning problems or induce overly conservative behaviors.
Therefore, a key challenge is developing safety models that are not only robust to varied driving behaviors, but that are also capable of accounting for context-dependent social norms that effect how drivers implicitly coordinate.


With that goal in mind, we propose Responsibility-Aware Control Barrier Functions (RA-CBFs), a \textit{responsibility-aware} safety paradigm that is conducive to efficient techniques for safe control synthesis and safety evaluation.
A primary use case of RA-CBFs is their application as a \textit{safety filter} within an AV stack, whereby the output of the main AV decision-making and planning algorithmic pipeline (powered, for example, by high-capacity human behavior prediction models based on deep learning) is monitored  for its compliance with basic safe and responsible driving behaviors (as encoded by the RA-CBF) and possibly revised in case of anomalies, in a manner similar to \cite{nister2019safety} and \cite{shalev2017formal}. As a proof of concept, we demonstrate this capability in the context of real-time closed-loop control and post-facto forensic analysis.

\noindent  \textbf{Related work.} We present relevant work on responsibility-aware safety methods for multi-agent interactions with an emphasis on AV applications. Specifically, we consider related work on socially-aware planning, safe multi-agent control, and safety constraint learning.

Many recent works focus on modeling and estimating drivers' social preferences to synthesize socially-aware driving behaviors. For instance, \cite{schwarting2019social} estimates the Social Value Orientation (SVO) of other drivers and formulates a SVO-based game-theoretic planner. \cite{toghi2021cooperative,sun2018courteous} craft a (potentially learned) planning reward function that incentivizes an AV to be more cooperative, sympathetic, and/or courteous. While these approaches demonstrate that accounting for social preferences can lead to more human-like AV behaviors, they do not provide any assurances or quantification of AV safety.

\begin{figure}
    \centering
    \includegraphics[width=\linewidth]{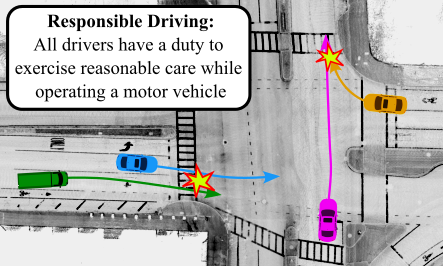}
    \caption{In human driving, vehicles can be expected to demonstrate a reasonable duty of care. For example, a trailing vehicle (green) takes responsibility for not colliding with the car in front of it (blue) and a merging vehicle (orange) follows formal and informal rules to avoid a collision with the vehicles in the lane (pink). How can we ensure that autonomous vehicles act according to such informal driving etiquette?}
    \vspace{-3em}
    \label{fig:hero_figure}
\end{figure}

In the safety-critical control literature, under decentralized multi-agent settings (which is more relevant to AV applications), many certifiably safe multi-agent control methods exist, including Control Barrier Functions (CBFs) \cite{usevitch2022adversarial,borrmann2015control, glotfelter_hybrid_2019, mustafa2022adversary}, Reciprocal Velocity Obstacles \cite{vandenberg2008reciprocal}, Hamilton-Jacobi Reachability \cite{chen2016multi,wang2020infusing}, Responsibility Sensitive Safety \cite{shalev2017formal}, and Safety Force Field \cite{nister2019safety}. However, these methods rely on worst-case or static assumptions on how other agents behave, which, unfortunately, do not hold in practice. This limitation has spurred recent works \cite{lyu2022responsibility, guo2021vr, chen2020guaranteed} that investigate collision-avoidance responsibility and how to decide which agent should take more effort to avoid collisions. However, these methods consider either centralized control or centrally defined social preferences, which does not apply to autonomous driving since the social preferences of other agents are not known precisely and cannot be assigned.

Learning responsibility allocation influenced by social norms necessitates the need to combine safety-critical control with data-driven methods. Recent safety-critical learning methods \cite{dawson2022safe,robey2020learning,qin2021learning,lyu2022adaptive} infer safety constraints from data. To the best of our knowledge, the consideration of responsibility has yet to be investigated within a data-driven safe multi-agent control setting. In this work, we take on a data-driven approach to learn how responsibility is allocated among multiple agents and synthesize corresponding safe \textit{and responsible} controls.
We elect to use CBFs as the basis of our method due to its interpretable and rigorous control-theoretic formulation and computational tractability.

\noindent  \textbf{Contributions}:
Our contributions are three-fold: (i) We present a novel concept of Responsibility-Aware Control Barrier Functions (RA-CBFs) which extends the standard CBF to account for asymmetric sharing of responsibility between multiple agents. (ii) We propose a data-driven constraint-learning algorithm to infer the responsibility allocations modeled in the RA-CBF formulation. (iii) We showcase our method using real-world driving data and demonstrate the utility of RA-CBFs and the learned responsibility allocations in safe closed-loop AV control and forensic analysis.


\section{Background} \label{sec:background}
The dynamics of the vehilces are abstracted as a nonlinear control-affine system of the form: 
\begin{align}
    \dot{\mb{x}} = \mb{f}(\mb{x}) + \mb{g}(\mb{x})\mb{u} \label{eq:ol_dyn}
\end{align}
where $\mb{x} \in \R^n$ and $\mb{u} \in \mathcal{U}\subset \R^m $ represent the states and inputs of the system and where $\mathcal{U}$ is assumed to be compact. We further assume that $\mb{f}: \R^{n} \to \R^{n}$ and $\mb{g} : \R^{n } \to \R^{ n \times m }$ are locally Lipschitz continuous. Given a state-feedback controller $\mb{k}: \R^n \to \R^m $, the closed-loop dynamics are: 
\begin{align}
    \dot{\mb{x}} = \mb{f}_\textrm{cl}(\mb{x}) = \mb{f}(\mb{x}) + \mb{g}(\mb{x}) \mb{k} (\mb{x}). \label{eq:cl_dyn}
\end{align}

\subsection{Control Barrier Functions}

In this section, we define safety as the forward-invariance of some \textit{safe set} $\mathcal{C}\subset \R^{n} $ and review Control Barrier Functions (CBFs) as a tool for synthesizing safe controllers.

\begin{definition}[Forward Invariance and Safety]
 A set $\mathcal{C}\subset \R^n$ is \textit{forward invariant} if for every $\mb{x}(0) \in \mathcal{C}$ the solution to \eqref{eq:cl_dyn} satisfies $\mb{x}(t) \in \mathcal{C}$ for all $t \geq 0 $. We call the system \eqref{eq:cl_dyn} \textit{safe} with respect to $\mathcal{C}$ if $\mathcal{C}$ is forward invariant. 
\end{definition}

Let the set $\mathcal{C}$ be the 0-superlevel set of some continuously differentiable function $h:\R^{n} \to \R$ with $0$ a regular value\footnote{We say that 0 is a regular value of $h$ if $h(\mb{x}) = 0 \implies \frac{\partial h }{\partial \mb{x} } \neq 0 $.}: 
\begin{align}
    \mathcal{C} &\triangleq \{ \mb{x} \in \R^n ~|~ h(\mb{x}) \geq 0 \} 
\end{align}
\noindent 
We deem $h$ a CBF if it also satisfies the following condition:

\begin{definition}[Control Barrier Function \cite{ames_control_2017}]
    Let $\mathcal{C}\subset \R^n$ be the 0-superlevel set of some continuously differentiable function $h: \R^{n} \to \R$ with 0 a regular value. The function $h$ is a \textit{Control Barrier Function (CBF)} for \eqref{eq:ol_dyn} if there exists an extended class $\mathcal{K}_\infty$ function\footnote{A continuous function $\alpha: \R_{\geq 0} \to \R_{\geq 0 } $ is a class $\mathcal{K}_\infty $ function if $\alpha(0) = 0$, $\alpha $ is strictly monotonically increasing, and $\lim_{c \to \infty} \alpha(c) = \infty$. A continuous function $\alpha: \R \to \R $ is an \textit{extended } $\mathcal{K}_\infty$ function if $\alpha(0) = 0$, $\alpha$ is strictly monotonically increasing, $\lim_{c \to \infty} \alpha(c) = \infty$, and $\lim_{c \to -\infty} \alpha(c) = - \infty$.  } $\alpha $ such that for all $\mb{x} \in \mathcal{C}$:  
    \begin{align}
        \sup_{\mb{u}\in \mathcal{U}} \quad   \overbrace{\underbrace{L_\mb{f}h(\mb{x})}_{\frac{\partial h }{\partial x}\mb{f}(\mb{x})  } + \underbrace{L_\mb{g}h(\mb{x})}_{\frac{\partial h}{\partial{x}}\mb{g}(\mb{x}) } \mb{u}}^{\frac{dh}{dt}(\mb{x}, \mb{u}) }\geq - \alpha (h(\mb{x})).
        \label{eq:cbf_constraint} 
    \end{align}
\end{definition}
\noindent Here, Lie derivative notation is used to represent the partial derivatives $L_\mb{f}h(\mb{x}) \triangleq \frac{\partial h}{\partial \mathbf{x}} \mb{f}(\mb{x})$ and $L_\mb{g}h(\mb{x}) \triangleq \frac{\partial h}{\partial \mathbf{x}} \mb{g}(\mb{x}) $. 
Intuitively, the CBF constraint \eqref{eq:cbf_constraint} limits $\frac{dh}{dt}$, and prevents $h$ from decreasing along the trajectory when $h(\mb{x}) = 0$, thus rendering $\mathcal{C}$ forward invariant for \eqref{eq:cl_dyn}. This intuition was formalized by \cite{ames_control_2017} in the following theorem: 

\begin{theorem}[CBF Safety, \cite{ames_control_2017}]
Given a set $\mathcal{C} \subset \R^n$ defined as the 0-superlevel set of a continuously differentiable function $h: \R^{n} \to \R$ with 0 a regular value, if $h$ is a CBF, then any locally Lipschitz controller $\mb{k}: \R^n \to \R^{m}$ that satisfies \eqref{eq:cbf_constraint} for all $\mb{x} \in \mathcal{C}$, renders the system \eqref{eq:cl_dyn} safe w.r.t. $\mathcal{C}$. \label{thm:cbf}
\end{theorem}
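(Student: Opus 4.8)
The plan is to reduce forward invariance of $\mathcal{C}$ to a scalar differential inequality satisfied by $h$ along closed-loop trajectories, and then to close the argument with a comparison principle; essentially all of the difficulty is concentrated in the behavior of trajectories starting on $\partial\mathcal{C}$.

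First I would record the standing regularity. Since $\mb{f},\mb{g}$ are locally Lipschitz and $\mb{k}$ is locally Lipschitz by hypothesis, the closed-loop field $\mb{f}_\textrm{cl}=\mb{f}+\mb{g}\mb{k}$ is locally Lipschitz, so for each $\mb{x}(0)\in\mathcal{C}$ there is a unique maximal solution $\mb{x}:[0,T_{\max})\to\R^n$ of \eqref{eq:cl_dyn}. Define $\lambda(t)\triangleq h(\mb{x}(t))$. Because $h\in C^1$ and $\mb{x}(\cdot)\in C^1$, the map $\lambda$ is differentiable with $\dot\lambda(t)=\tfrac{\partial h}{\partial \mb{x}}(\mb{x}(t))\,\mb{f}_\textrm{cl}(\mb{x}(t))=L_\mb{f}h(\mb{x}(t))+L_\mb{g}h(\mb{x}(t))\,\mb{k}(\mb{x}(t))$, and since $\mb{k}$ satisfies \eqref{eq:cbf_constraint} at every point of $\mathcal{C}$, we obtain the scalar inequality $\dot\lambda(t)\geq-\alpha(\lambda(t))$, valid whenever $\mb{x}(t)\in\mathcal{C}$.

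Next I would compare $\lambda$ against the autonomous scalar ODE $\dot y=-\alpha(y)$ with $y(0)=\lambda(0)=h(\mb{x}(0))\geq0$. Two facts are needed: (a) any solution $y(\cdot)$ with $y(0)\geq0$ satisfies $y(t)\geq 0$ and stays in the compact interval $[0,y(0)]$, hence exists on all of $[0,T_{\max})$ — this holds because $\alpha(0)=0$ makes $y\equiv0$ an equilibrium and, since $\alpha$ is extended class $\mathcal{K}_\infty$, $\alpha(s)<0$ for $s<0$, so a solution cannot pass from $0$ to negative values; (b) as long as the differential inequality for $\lambda$ is in force, the comparison lemma gives $\lambda(t)\geq y(t)$. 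Running (b) up to the first time the trajectory could leave $\mathcal{C}$ and combining with (a) yields $h(\mb{x}(t))=\lambda(t)\geq y(t)\geq0$, i.e. $\mb{x}(t)\in\mathcal{C}$, for all $t$ in the maximal interval — which is precisely forward invariance of $\mathcal{C}$, and hence safety; if one insists on $t\in[0,\infty)$, one separately notes that trajectories confined to $\mathcal{C}$ cannot escape in finite time under the operative completeness assumption.

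The step I expect to be the genuine obstacle — and the reason the regular-value hypothesis appears — is closing the loop in (b): the inequality $\dot\lambda\geq-\alpha(\lambda)$ is only known while $\mb{x}(t)\in\mathcal{C}$, so one must rule out the trajectory ``sneaking out'' through $\partial\mathcal{C}$, and the textbook comparison lemma moreover wants $\alpha$ locally Lipschitz whereas an extended class $\mathcal{K}_\infty$ function is merely continuous. I would resolve this either (i) by working with the \emph{maximal} solution of $\dot y=-\alpha(y)$, whose comparison property for $\dot\lambda\geq-\alpha(\lambda)$ survives without Lipschitzness, together with a first-exit-time and continuity argument on $\lambda$ to preclude boundary escape; or, more cleanly, (ii) by bypassing the comparison lemma via Nagumo's theorem: because $0$ is a regular value, the contingent cone to the closed set $\mathcal{C}$ at a boundary point $\mb{x}$ is $\{v:\tfrac{\partial h}{\partial \mb{x}}(\mb{x})v\geq0\}$, and \eqref{eq:cbf_constraint} evaluated there gives $\tfrac{\partial h}{\partial \mb{x}}(\mb{x})\mb{f}_\textrm{cl}(\mb{x})\geq-\alpha(0)=0$, so $\mb{f}_\textrm{cl}(\mb{x})$ lies in that cone; with uniqueness of solutions this is exactly Nagumo's sufficient condition for positive invariance of $\mathcal{C}$. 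Route (i) is the one in \cite{ames_control_2017} and has the side benefit of exhibiting $h(\mb{x}(t))$ as lower-bounded by a non-negative decaying signal, which is the seed of the later set-stability refinements.
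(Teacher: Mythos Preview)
The paper does not supply its own proof of this theorem: it is stated as a background result with attribution to \cite{ames_control_2017}, and the only proof appearing in the paper is that of Theorem~\ref{thm:responsible_safety}, which simply \emph{invokes} Theorem~\ref{thm:cbf} once the centralized CBF inequality has been recovered. There is therefore nothing in the paper to compare your argument against at the level of technique.

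That said, your proposal is a faithful reconstruction of the standard argument in the cited reference. Route (i) --- reducing to the scalar inequality $\dot\lambda \ge -\alpha(\lambda)$ and invoking a comparison principle against $\dot y = -\alpha(y)$ --- is precisely the mechanism used in \cite{ames_control_2017}, and you correctly flag the two subtleties (continuity-only of $\alpha$, and that the inequality is only known on $\mathcal{C}$). Route (ii) via Nagumo's theorem is an equally valid and arguably cleaner alternative that makes the role of the regular-value hypothesis transparent. Either would be acceptable; for the purposes of this paper, a one-line appeal to \cite{ames_control_2017} is all that is expected.
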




\section{Responsibility-aware Decentralized Multi-agent Safety} \label{sec:decentralized_safety}

In this section, we extend the CBF framework 
to a decentralized multi-agent setting and introduce additional terms to account for asymmetrically shared responsibility.

\subsection{Decentralization Multi-agent CBF} \label{subsec:multiagent}
We extend the system dynamics \eqref{eq:ol_dyn} to multiple agents: 
\begin{align}
    \dot{\mathbf{x}}_i  = \mathbf{f}_i(\mathbf{x}_i) + \mathbf{g}_i (\mathbf{x}_i) \mathbf{u}_i
\end{align}
\noindent where $\mb{x}_i \in \R^{n_i} $, $\mathbf{u}_i\in\mathcal{U}_i \subset \R^{m_i} $, $\mb{f}_i : \R^{n_i} \to \R^{n_i}$, $\mb{g}_i : \R^{n_i} \to \R^{m_i}$ represent the state, input, drift, and actuation matrix of agent $i$. For the entire system of $N\in \mathbb{N}$ agents, let $\mb{x} = \lmat \mb{x}_1^\top & \cdots &  \mb{x}_N^\top \rmat^\top $ denote the concatenated state and the dynamics for $\mb{x}$ be denoted as in \eqref{eq:ol_dyn}: 
\begin{align}
    \dot{\mb{x}} = \underbrace{\lmat \mb{f}_1(\mb{x}_1) \\ \vdots \\ \mb{f}_N(\mb{x}_N) \rmat}_{\mb{f}(\mb{x})} + \underbrace{\lmat \mb{g}_1(\mb{x}_1) \\ \vdots \\ \mb{g}_N(\mb{x}_N) \rmat}_{\mb{g}(\mb{x})} \underbrace{\lmat \mathbf{u}_1 \\ \vdots \\ \mathbf{u}_N \rmat}_{\mb{u}}.  \label{eq:decentralized_oldyn}
\end{align}

If the multi-agent system is governed by a centrallized controller, the CBF inequality can be checked directly and used as a constraint in an optimization-based controller to obtain safe inputs \cite{ames_control_2014}. However, centralized control is often unrealizable for AVs due to communication and scalability issues as well as the presence of human actors. Thus, we focus on a decentralized variant of the CBF constraint \eqref{eq:cbf_constraint} and assume that each agent can measure the states of the other agents, but independently generates its own input  according to some controller unknown to the other agents.

One common method for retaining safety guarantees in the context of decentralized control, is to ensure robustness with respect to all possible controls of the other agents (including the worst-case inputs) as in \cite{usevitch2022adversarial}. In this case, constraint \eqref{eq:cbf_constraint} from the perspective of agent $i$ becomes: 
\begin{align}
    \sup_{\mb{u}_i \in \mathcal{U}_i} \inf_{\substack{\mb{u}_j \in \mathcal{U}_j, \\  j\neq i }} L_\mb{f}h(\mb{x}) + L_\mb{g}h(\mb{x})\mb{u}\geq -\alpha(h(\mb{x})). 
    \label{eq:worst_case_cbf}
\end{align}
\noindent This is a conservative constraint which ensures the safety of the system even when other agents act adversarially. For human-interactive systems where responsibility is shared, this controller is often unnecessarily conservative.


Despite their safety-guarantees, worst-case constraints like \eqref{eq:worst_case_cbf} are highly conservative and prevent fluent behaviors  \cite{cosner2022safety}. It is therefore desirable to find a less conservative safety constraint that is more cognizant of the social interactions between agents even when the controllers of the other agents are unknown. For this purpose, we consider a novel CBF framework that models \textit{social responsibility}.

\subsection{Responsible-Aware Control Barrier Functions}

In multi-agent systems of human actors, the responsibility for maintaining safety is typically shared amongst people. For example, humans exhibit social behavior in crowd navigation and driving where the burden of maintaining safety is distributed amongst everyone
\cite{schwarting2019social, helbing1995social}. 
Equipped with the notion that agents share the responsibility of maintaining safety, we move away from worst-case behavioral assumptions, and instead, \textit{learn} the responsibility allocation from data.
First, we define \textit{responsibility allocation functions}: 
\begin{definition}[Responsibility Allocation Function]
    A function $\gamma: \mathbb{N} \times \R^n \to \R $ is a \textit{responsibility allocation function} for $N \in \mathbb{N}$ if for all $\mb{x}\in \R^n$:
    \begin{align}
        \sum_{i \in \{ 1, \dots, N \}} \gamma(i, \mb{x}) \geq 0, \label{eq:raf_property}
    \end{align}
\end{definition}
\noindent For agent $i$ in a multi-agent system at state $\mb{x}$, $\gamma(i, \mb{x}) > 0$ indicates increased responsibility, $\gamma(i, \mb{x}) = 0 $ indicates evenly shared responsibility, and  $\gamma(i, \mb{x})<0$ indicates decreased responsibility. The sum of $\gamma(i, \mb{x}) $ is lower bounded by zero to ensure that the total allocated responsibility must be greater than or equal to that of even sharing. 

Using these responsibility allocation functions we can present our definition of Responsibility-Aware Control Barrier Functions (RA-CBFs) which consider responsibility allocation in their decentralized multi-agent safety constraint: 
\begin{definition}[Responsibility-Aware Control Barrier Function]\label{def:RA-CBF}
Let $\mathcal{C}\subset \R^n $ be the 0-superlevel set of some continuously differentiable function $h: \R^n \to \R$ with 0 a regular value. Additionally, let $\gamma :\mathbb{N} \times \R^n  \to \R $ be a responsibility allocation function for $N \in \mathbb{N}$. The function $h$ is a \textit{Responsibility-Aware CBF} for the system \eqref{eq:decentralized_oldyn} and responsibility allocation function $\gamma$ if there exists an extended class $\mathcal{K}_\infty$ function $\alpha$ such that for all $\mb{x}\in \mathcal{C}$ and all $i \in \{1, \dots, N \} $ : 

\vspace{-2mm}
{\small
\begin{align}
    \sup_{\mb{u}_i \in \mathcal{U}_i}   \underbrace{\overbrace{L_{\mb{g}_i}h(\mb{x}) \mb{u}_i +  \frac{1}{N}\Big(\alpha(h(\mb{x})) + L_\mb{f}h(\mb{x})\Big)}^{\mb{c}_i(\mb{x}, \mb{u}) \triangleq} -  \gamma(i, \mb{x})}_{\textrm{RA-CBF Constraint}(i, \mb{x}, \mb{u}, \gamma) \triangleq} \geq 0 . \label{eq:ra-cbf_constraint}
\end{align}
}
\end{definition}

\begin{remark}
For generality, RA-CBFs are presented for $N$ agents but in practice it is common to enforce CBF constraints between each pair of agents  where the number of constraints enforced on agent $i$'s input grows linearly with the number of agents \cite{glotfelter_nonsmooth_2020}. In this case there would be several pairwise RA-CBF constraints with $N=2$. In Section \ref{sec:application}, we take this approach in our application.
\end{remark}


\begin{theorem}[Responsibility-Aware Safety]\label{thm:responsible_safety}
Given a set $\mathcal{C}\subset \R^n$ defined as the 0-superlevel set of a continuously differentiable function $h:\R^n \to \R$ with 0 a regular value, if $h$ is an RA-CBF for \eqref{eq:decentralized_oldyn} and the responsibility allocation function $\gamma: \mathbb{N} \times \R^n \to \R$ for $N\in \mathbb{N}$, then any locally Lipschitz controller $\mb{k}: \R^n \to \R^m $ that satisfies \eqref{eq:ra-cbf_constraint} for all $\mb{x}\in \mathcal{C}$ and $i \in \{1, \dots, N\}$, renders system \eqref{eq:cl_dyn} safe with respect to $\mathcal{C}$. 
\end{theorem}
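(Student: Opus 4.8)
The plan is to reduce the $N$ per-agent RA-CBF inequalities to the single-agent closed-loop CBF inequality of \cref{thm:cbf} by summing over agents. Fix a locally Lipschitz controller $\mb{k}$ satisfying the RA-CBF Constraint \eqref{eq:ra-cbf_constraint} for every $\mb{x}\in\C$ and every $i\in\{1,\dots,N\}$, write $\mb{k}(\mb{x})=[\mb{k}_1(\mb{x})^\top,\dots,\mb{k}_N(\mb{x})^\top]^\top$ for its natural block decomposition, and evaluate the $i$-th constraint at $\mb{u}_i=\mb{k}_i(\mb{x})$. Since the $i$-th constraint states $\mb{c}_i(\mb{x},\mb{k}(\mb{x}))\geq\gamma(i,\mb{x})$ with $\mb{c}_i(\mb{x},\mb{u})=L_{\mb{g}_i}h(\mb{x})\mb{u}_i+\tfrac1N\big(\alpha(h(\mb{x}))+L_{\mb{f}}h(\mb{x})\big)$, summing over $i=1,\dots,N$ and using $N\cdot\tfrac1N=1$ gives, for all $\mb{x}\in\C$,
\[
\sum_{i=1}^N L_{\mb{g}_i}h(\mb{x})\,\mb{k}_i(\mb{x})\;+\;\alpha(h(\mb{x}))\;+\;L_{\mb{f}}h(\mb{x})\;\geq\;\sum_{i=1}^N\gamma(i,\mb{x}).
\]

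Next I would simplify both sides. On the left, the block structure of $\mb{g}$ in \eqref{eq:decentralized_oldyn} gives $L_{\mb{g}}h(\mb{x})\mb{k}(\mb{x})=\tfrac{\partial h}{\partial\mb{x}}\mb{g}(\mb{x})\mb{k}(\mb{x})=\sum_{i=1}^N L_{\mb{g}_i}h(\mb{x})\mb{k}_i(\mb{x})$; on the right, the defining property \eqref{eq:raf_property} of a responsibility allocation function, $\sum_{i=1}^N\gamma(i,\mb{x})\geq0$, lets me drop that term. The result is the closed-loop CBF inequality
\[
L_{\mb{f}}h(\mb{x})+L_{\mb{g}}h(\mb{x})\mb{k}(\mb{x})\;\geq\;\sum_{i=1}^N\gamma(i,\mb{x})-\alpha(h(\mb{x}))\;\geq\;-\alpha(h(\mb{x})),\qquad\forall\,\mb{x}\in\C,
\]
i.e.\ $\mb{k}$ satisfies \eqref{eq:cbf_constraint} for the concatenated system \eqref{eq:decentralized_oldyn}. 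Repeating the summation on the suprema instead (the per-agent suprema add because the blocks $\mb{u}_i$ range over the independent sets $\mathcal{U}_i$) shows $\sup_{\mb{u}}\big(L_{\mb{f}}h(\mb{x})+L_{\mb{g}}h(\mb{x})\mb{u}\big)\geq-\alpha(h(\mb{x}))$, so $h$ is a CBF for \eqref{eq:decentralized_oldyn} with this same $\alpha$. \cref{thm:cbf} then applies directly and yields forward invariance of $\C$ under $\mb{k}$, i.e.\ safety of \eqref{eq:cl_dyn}. (Equivalently, one can skip the CBF check and argue directly: along any closed-loop solution $\tfrac{d}{dt}h(\mb{x}(t))\geq-\alpha(h(\mb{x}(t)))$, so the comparison lemma against $\dot y=-\alpha(y)$, $y(0)=h(\mb{x}(0))\geq0$, gives $h(\mb{x}(t))\geq y(t)\geq0$ for all $t\geq0$.)

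I do not expect a substantive obstacle — the argument is largely bookkeeping — but the steps that need care are (i) confirming that the $\tfrac1N$ weighting is exactly what makes the drift term $L_{\mb{f}}h$ and the class-$\mathcal{K}_\infty$ term $\alpha(h)$ reassemble without a stray factor upon summing the $N$ constraints, and (ii) using \eqref{eq:raf_property} in the correct direction: it is the lower bound $\sum_i\gamma(i,\mb{x})\geq0$ (not an equality) that is invoked, and it is invoked favorably, so any allocation with nonnegative total responsibility inherits the guarantee. It is also worth noting that this reduction shows RA-CBFs generalize CBFs: setting $\gamma\equiv0$ recovers an evenly-split decentralized CBF constraint.
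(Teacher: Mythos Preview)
Your proposal is correct and follows essentially the same route as the paper: sum the $N$ per-agent RA-CBF inequalities, use the block structure of $\mb{g}$ and the $\tfrac1N$ weighting to reassemble the centralized CBF inequality, invoke $\sum_i\gamma(i,\mb{x})\geq0$ to drop the right-hand side, and then apply \cref{thm:cbf}. If anything you are slightly more thorough than the paper, which does not explicitly verify the CBF hypothesis of \cref{thm:cbf} or spell out the block-structure identity $L_{\mb{g}}h\,\mb{k}=\sum_i L_{\mb{g}_i}h\,\mb{k}_i$.
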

\begin{proof}
First let $
\mb{c}_i(\mb{x}, \mb{k}(\mb{x})) \triangleq L_{\mb{g}_i}h(\mb{x}) \mb{k}_i(\mb{x}) + \frac{1}{N}\Big(\alpha(h(\mb{x})) + L_\mb{f}h(\mb{x})\Big) $
for all $i \in \{1, \dots, N \} $. Since the $\mb{k}_i$ satisfies \eqref{eq:ra-cbf_constraint}, 

\vspace{-2mm}
{\small
\begin{align}
   0 & \geq -\mb{c}_i(\mb{x}, \mb{k}(\mb{x}))+\gamma(i, \mb{x}), \\
   & \geq -\sum_{i \in \{ 1, \dots, N\}}  \mb{c}_i(\mb{x}, \mb{k}(\mb{x})) + \sum_{i \in \{ 1, \dots, N\}}  \gamma(i, \mb{x}), \label{pf:sum}\\
   & \geq  -\sum_{i \in \{ 1, \dots, N \}} \mb{c}_i(\mb{x}, \mb{k}(\mb{x}))\label{pf:resp_ineq}
\end{align}
}
Inequality \eqref{pf:sum} follows from the decentralized constraint \eqref{eq:ra-cbf_constraint} for all $i$  and \eqref{pf:resp_ineq} holds since $\gamma$ is a responsibility allocation function for $N$. 
Since the final inequality \eqref{pf:resp_ineq} is equivalent to the centralized CBF constraint \eqref{eq:cbf_constraint}, Theorem \ref{thm:cbf} implies the safety of system \eqref{eq:cl_dyn} with respect to $\mathcal{C}$. 
\end{proof}

In summary, instead of considering the worst-case inputs from other agents, our RA-CBF approach uses $\gamma(i,\mb{x})$ to allow agent $i$'s required contribution to decentralized multi-agent safety to vary depending on the state $\mb{x}$ of all of the agents in the scene. 
Also, instead of explicitly considering the uncertainty in the other agents' actions, one perspective of the responsibility allocation function is that it models a bound on the projection \cite{taylor2020control} of the other agents' inputs onto the CBF time derivative. Thus we can learn the effect of the other agents' actions as a scalar adjustment to $\frac{dh}{dt}$ as opposed to predicting their exact trajectories.
 
Our responsibility model is similar to that of  \cite{lyu2022responsibility} which instead uses a multiplicative term and is limited to driftless systems (i.e., systems where $\mb{f}(\mb{x}) \equiv 0$). By using an additive term, our model is generally applicable to control affine systems and is capable of accounting for the effect of responsibility even when the unforced dynamic  are unsafe, i.e. $\alpha(h) + L_\mb{f}h(\mb{x}) \leq 0 $. 

\section{Learning Responsibility Allocation} \label{sec:learning}

In this section we formalize the problem of learning responsibility allocation functions $\gamma(i,\mb{x})$ from data and describe our method for learning $\gamma$ from expert demonstrations, given a known safe set $\mathcal{C}$ and associated CBF $h$. 

\subsection{Problem Setting}
We assume that agent $i$ strives to minimize some unknown function $Q_i: \R^{n_i\times m_i } \to \R$ and does so according to a constrained optimal control policy: 
\begin{align}
    \mb{k}_i(\mb{x}) = 
    \argmin_{u_i \in \mathcal{U}_i} & \quad Q_i(\mb{x}_i, \mb{u}_i)  \label{eq:optimal_controller}\\
    \textrm{s.t. } & \quad \textrm{RA-CBF Constraint}(i,\mb{x}, \mb{u}, \gamma)  \geq 0 \nonumber
\end{align}
where $Q_i(\mb{x}_i, \mb{u}_i)$ represents agent $i$'s cost for input $\mb{u}_i$ at state $\mb{x}_i$. 
Although the cost function is unknown, we assume that all agents obey the RA-CBF constraint for some $\gamma$ that we seek to learn, thus framing the problem of learning responsibility allocations as constraint learning.

\subsection{Learning Paradigm}
Let $\mathcal{D} = \{ \mb{u}^k, \mb{x}^k \}_{k=1}^{N_d}$ be a dataset of state-input pairs gathered from expert (human) demonstrations where $N_d$ represents the total number of data points collected. Since the cost function $Q_i$ can vary during data collection, it is possible for a state to have several associated expert inputs.

Our goal is find some responsibility allocation function $\gamma$ such that the RA-CBF constraint is satisfied for all state-input pairs in the expert demonstrations $\mathcal{D}$. This can be written as the constrained optimization problem: 
\begin{align}
    \gamma^* &=
    \argmin_{\gamma}  \quad \Vert \gamma \Vert \label{eq:resp_alloc_learning}\\
    & \textrm{s.t. } \quad  \textrm{RA-CBF Constraint}(i, \mb{x}, \mb{u}, \gamma) \geq 0 ,\: \forall i\in \{1,...,N\},\nonumber\\
    & \quad \sum_{i \in \{ 1, \dots, N \} } \gamma(i,\mb{x}) \geq 0, \qquad \textrm{for all } (\mb{x}, \mb{u}) \in \mathcal{D},\nonumber
\end{align}
\noindent where the constraints enforce satisfaction of the RA-CBF and ensure that $\gamma$ is a responsibility allocation function.

To find an approximate solution to this problem we take inspiration from \cite{robey2020learning,qin2021learning} and relax \eqref{eq:resp_alloc_learning} to the following unconstrained loss function: 

\vspace{-2mm}
{\small
\begin{align} 
    \mathcal{L}(\mathcal{D}, \gamma) = 
    &  \Vert \gamma \Vert + \lambda_1 \sum_{(\mb{x}, \mb{u}) \in \mathcal{D}} \sum_{i=1}^N\bigg[ -\mb{c}_i(\mb{x}, \mb{u}) + \gamma(i, \mb{x})    \bigg]_+ \nonumber  \\
    & +  \lambda_2 \sum_{(\mb{x}, \mb{u}) \in \mathcal{D}}\left[\sum_{i=1}^N -\gamma_i(i, \mb{x}) \right]_+  \label{eq:loss}
\end{align}
}
where $\lambda_1, \lambda_2, \in \R_{\geq 0}$ are hyperparameters which adjust the constraint relaxations and $ [ \; \cdot \; ]_+ \triangleq \max \{ \cdot, 0 \}  $. This loss function can then be used find approximate solutions to \eqref{eq:resp_alloc_learning}: 
\begin{align}
    \gamma^* \approx \argmin & \quad  \mathcal{L}(\mathcal{D}, \gamma) \label{:unreg_opt}
\end{align}

\begin{figure*}[t]
    \centering
    \includegraphics[width=\linewidth]{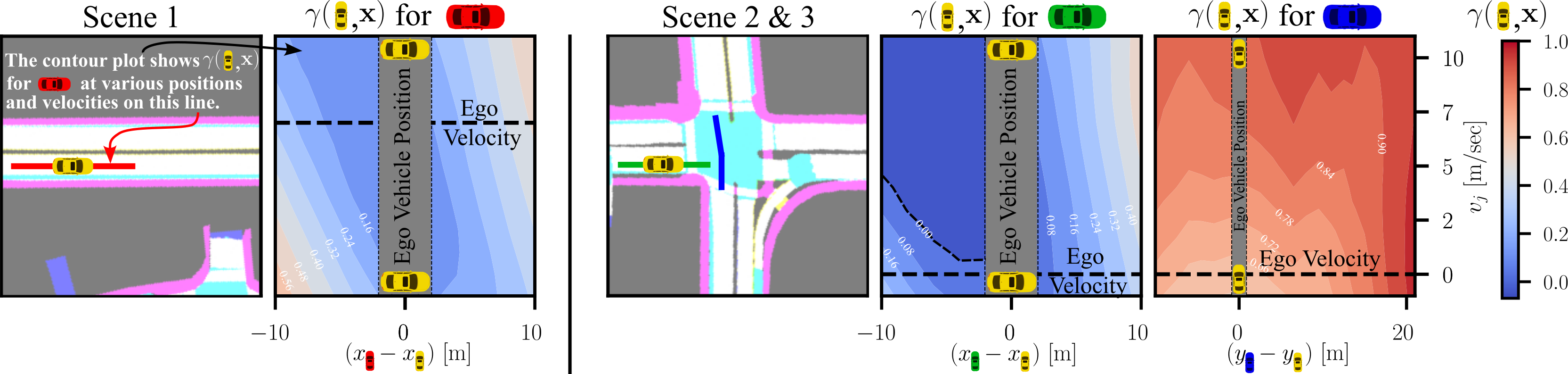}
    \caption{The learned responsibility allocation surface is visualized for a range of velocities and relative positions. \textbf{Scene 1: } the ego vehicle (yellow) is driving on a two lane road. In all cases, $\gamma(\textrm{ego}, \mb{x}) > 0$ indicating a degree of conservative driving. Generally, $\gamma(\textrm{ego},\mb{x})$ is larger when the other agent (red) is in front of the ego vehivlethan when behind, indicating increased responsibility when driving behind another vehicle. \textbf{Scene 2: } The ego vehicle is stopped at a four-way intersection with the other vehicle (green) ahead or behind it (and no blue agent). Again the ego vehicle (yellow) is more responsible when the green vehicle is in front of it than when it is behind it. \textbf{Scene 3: } The ego vehicle (yellow) is stopped at a four-way intersection with the other vehicle (blue) crossing from top to bottom (and no green agent). $\gamma(\textrm{ego}, \mb{x}) $ is large for all positions and velocities of the blue vehicle showing that the ego agent takes is more responsible in this situation.} 
    \vspace{-0.5cm}
    \label{fig:resp_vis}
\end{figure*}

\subsection{Responsibility Regularization}
However, the loss function used in the unconstrained optimization \eqref{:unreg_opt} is insufficient since, 
as in Inverse Reinforcement Learning, the problem of learning the constraint in \eqref{eq:optimal_controller} is poorly defined since the optimal input generated by \eqref{eq:optimal_controller} is a function of both the unknown cost function $Q_i$ and unknown responsibility allocation function $\gamma$. 
Intuitively, this is because we cannot answer the question ``did the agent act that way because it wanted to (cost minimization) or because it had to (safety constraint satisfaction)?". To better define the constraint learning problem we take an approach similar to \cite{scobee2019maximum} and regularize $\gamma$ by maximizing the likelihood that it was used in \eqref{eq:optimal_controller} to generate the expert demonstrations $\mathcal{D}$. 

Following the maximum entropy model presented in \cite{ziebart2008maximum}  with the variant for continuous-time nonlinear systems presented in \cite{aghasadeghi2011maximum} we wish to solve the optimization problem: 
\begin{align}
    \gamma^*_\textrm{reg} = \argmax \sum_{(\mb{x}, \mb{u}) \in \mathcal{D}}\mathcal{P}(\mb{u} ~|~ \mb{x}, \gamma).
\end{align}
We approximate the probability of a given $\mb{u}$, by choosing $\textrm{disc}(\mathcal{U})$ to be a finite discretization of $\mathcal{U}$ such as $\textrm{disc}(\mathcal{U}) = \{\mb{u} \in \mathcal{U} \;|\;  \delta\lfloor  \mb{u}/\delta \rceil \}  $ for some $\delta>0$ where $\lfloor \cdot \rceil $ rounds each component to the nearest integer. 
Mimicking the forms presented in \cite{aghasadeghi2011maximum, scobee2019maximum}, the approximate probability of an input $\mb{u} \in  \mathcal{U}$  given the system state $\mb{x}$ and responsibility allocation $\gamma$  is: 

\vspace{-3mm}
{\small
\begin{align}
    \mathcal{P}(\mb{u} ~ | ~ \mb{x}, \gamma) = \frac{e^{R(\mb{x},\mb{u})}}{Z_\gamma} \mathds{1}^\gamma(\mb{x},\mb{u}), \: Z_\gamma = \sum_{\bs{\upsilon} \in \textrm{disc}(\mathcal{U})} e^{R(\mb{x},\bs{\upsilon})} \mathds{1}^\gamma(\mb{x}, \bs{\upsilon}) \label{eq:maxlike}
\end{align}
}
\noindent where $Z$ is the partition function, $R: \R^n \times \R^m \to \R $ is the reward function, 
and $\mathds{1}^\gamma(\mb{x}, \mb{u}) \mapsto \{0, 1 \}$ indicates satisfaction of the RA-CBF constraints given $\mb{x},\; \mb{u},$ and $\gamma$.

To maximize the likelihood of the demonstration we minimize the number of feasible inputs 
while retaining the feasibility of the expert demonstrations.
We note 
the total number of feasible inputs in $\textrm{disc}(\mathcal{U})$ decreases as  $\gamma(i, \mb{x})$ increases regardless of $R$, 
so we can maximize $\mathcal{P}(\mb{u}|\mb{x}, \gamma)$ without knowledge of the agents' reward functions by maximizing  $\gamma$ while maintaining feasibility of the expert demonstrations. This can be expressed as the optimization problem:
\begin{align}
    \gamma_\textrm{reg}^* \approx \argmax_{\gamma} \quad & \sum_{(\mb{x}, \mb{u})\in \mathcal{D}} \sum_{i=1}^N \gamma(i, \mb{x})\\
    \textrm{s.t. } \quad &\textrm{RA-CBF Constraint}(i, \mb{x}, \mb{u}, \gamma) \geq 0,  \nonumber\\
    & \textrm{for all } i \in \{1, \dots, N\} \textrm{ and } (\mb{x}, \mb{u}) \in \mathcal{D} \nonumber 
 \end{align}

Since constraint feasibility on $\mathcal{D}$ is already accounted for in \eqref{eq:loss}, this regularization can be added to the loss as:
\begin{align}
    \mathcal{L}_\textrm{reg}(\mathcal{D}, \gamma) \triangleq \mathcal{L}(\mathcal{D}, \gamma) + \lambda_3 \sum_{(\mb{x}, \mb{u}) \in \mathcal{D}} \sum_{i=1}^N-\gamma(i, \mb{x}) 
\end{align}
with hyperparamter $\lambda_3 \in \R_{\geq 0}$  
which can be used in the final regularized optimization problem to estimate $\gamma$:  
\begin{align}
    \gamma^* = \argmin \; \mathcal{L}_\textrm{reg}(\mathcal{D}, \gamma) 
\end{align}


\section{Application to Autonomous Driving} \label{sec:application}

In this section we apply our RA-CBF and responsibility allocation learning method to urban driving using the Boston Seaport data provided by the nuScenes dataset \cite{nuscenes}.

We assume that all agents in the scene are vehicles (i.e. there are no pedestrians) and we model each agent as:  
\begin{align}
    \underbrace{\lmat \dot x_i \\ \dot y_i \\ \dot v_i \\ \dot \theta \rmat }_{\dot{\mb{x}}_i} = \underbrace{\lmat v_i \cos(\theta_i) \\ v_i \sin(\theta_i) \\ 0 \\ 0  \rmat}_{\mb{f}_i(\mb{x}_i)} + \underbrace{\lmat 0 & 0 \\ 0 & 0 \\ 1 & 0 \\ 0 & 1  \rmat}_{\mb{g}_i(\mb{x}_i)} \underbrace{\lmat a_i \\ \omega_i \rmat }_{\mb{u}_i},  \label{eq:veh_cl_dyn}
\end{align}
\noindent where $(x_i,y_i) \in \R^2$, $v_i ,\theta_i ,a_i,\omega_i \in \R$ represent the position, velocity, yaw, acceleration, and yaw rate\footnote{Bezier curves are fit to position and yaw data and then differentiated to obtain velocity, acceleration, and yaw rate. The code repository for learning the responsibility allocation function $\gamma$ can be found \href{https://github.com/rkcosner/learning_responsibility_allocation}{\textbf{here}}.} of vehicle $i$.


\subsection{Choosing a Safety Metric for Autonomous Driving}
To define the safety function $h: \R^n \to \R $, we begin by assuming all vehicles must maintain a minimum inter-vehicle distance $\underline{d}>0$. With this in mind, let $d_{\textrm{min}}: \R^{4} \times \R^{4} \to \R $ be the minimum distance between two agents. 
We can then define the pairwise safe set between agents $i$ and $j$ to be:
\begin{align}
    \mathcal{C}_{ij} = \big\{ \mb{x} \in \R^n ~|~ \underbrace{d_{\textrm{min}}(\mb{x}_i, \mb{x}_j)-\underline{d}}_{h_{ij}(\mb{x})\triangleq} \geq 0 \big\}.
\end{align}
However this function is of relative degree 2 w.r.t. $a_i$ (i.e., $\frac{dh}{dt}$ is not directly affected by $a_i$) and describes safety by only considering the instantaneous current position. 


In order to incorporate a temporal aspect and ensure that the time derivative of $h_{ij}$ is affected by both vehicles' acceleration and angle rate we take inspiration from \cite{gurriet2020scalable} and forward project the current state using a \textit{backup controller} $\mb{k}_B:\R^{4} \to \R^{2}$ over a time interval $[0,T]$ for $T \in \R_{>0}$. By assumption, for any $\mb{x}_i(t) \in \R^{4} $ there exists a unique solution $\bs{\phi}: [0,T] \to \R^4 $ satisfying: 
\begin{align}
    \frac{d}{dt}\bs{\phi}(\tau) & = \mb{f}_i(\bs{\phi}(\tau)) + \mb{g}_i(\bs{\phi}(\tau))\mb{k}_B(\bs{\phi}(\tau)), \label{eq:flow_diffeq}\\
    \bs{\phi}(0) & = \mb{x}_i(t) \label{eq:flow_iv}
\end{align}
The solution $\bs{\phi}$ starting at $\mb{x}_i(t)$ is the flow under $\mb{k}_B$, and is  denoted as $\bs{\varphi}_\tau(\mb{x}_i) \triangleq \bs{\phi}(\tau)$.
Similar ideas of forward projection are also seen in Velocity Obstacles \cite{wilkie2009generalized}, Safety Force Fields \cite{nister2019safety}, and Responsibility-Sensitive Safety \cite{shalev2017formal}. 

Using the flow $\bs{\varphi}$ and the distance function $d_{ij}$ we can find the minimum distance that would be achieved during the interval $[t, T+t]$ if $\mb{k}_B$ were the controller for both vehicles:  
\begin{align}
    h^{\bs{\varphi}}_{ij}(\mb{x}) = \min_{\tau \in [0, T] } d_\textrm{min}(\bs{\varphi}_{\tau}(\mb{x}_i), \bs{\varphi}_\tau(\mb{x}_j))   - \underline{d}, \label{eq:flow_barrier}
\end{align}
which has the associated safe set $\mathcal{C}_{ij}^\varphi \subseteq \mathcal{C}_{ij} \subset \R^n $ 
\begin{align}
    \mathcal{C}_{ij}^{\bs{\varphi}} = \left\{ \mb{x} \in \R^n ~|~ h^{\bs{\varphi}}_{ij}(\mb{x}) \geq 0 \right\}.
\end{align}
To compute $h^{\bs{\varphi}}_{ij}$ the interval $[0,T]$ was discretized at 100 Hz as in \cite{gurriet2020scalable, cosner_measurement-robust_2021} and soft minimum functions were used to ensure differentiability.  It is shown in \cite{chen2021backup} that the CBF $h^{\bs{\varphi}}_{ij}$ constructed from the backup controller is guaranteed to be of relative degree 1 under mild assumptions.

For the backup controller we choose $\mb{k}_B(\mb{x}_i) = \mb{0} $ which approximates idling. Unlike other methods \cite{nister_visual_2004, shalev2017formal} which assume maximum braking for their predictors, we choose an idling controller since it better approximates nominal driving behavior and does not introduce worst-case assumptions.

Given these pairwise safe sets $\mathcal{C}_{ij}^\varphi$, we can define the a global safe set $\mathcal{C}^\varphi \subseteq \mathcal{C}_{ij}^\varphi \subseteq \mathcal{C}_{ij}$ for all $i \neq j$  as: 
\begin{align}
    \mathcal{C}_{ij}^\varphi  = \bigcap_{i\neq j }\mathcal{C}_{ij}^\varphi  \quad \textrm{ with }\quad   h(\mb{x}) = \min_{i\neq j} h_{ij}^{\bs{\varphi}}(\mb{x}). \label{eq:final_cbf}
\end{align}
 \noindent The intersection of safe sets has been studied in \cite{glotfelter_hybrid_2019} and safety of such sets can be achieved by enforcing the safety constraint for all $i\neq j$. 
 
 \begin{figure*}[t]
    \centering
    \includegraphics[width=0.99\linewidth]{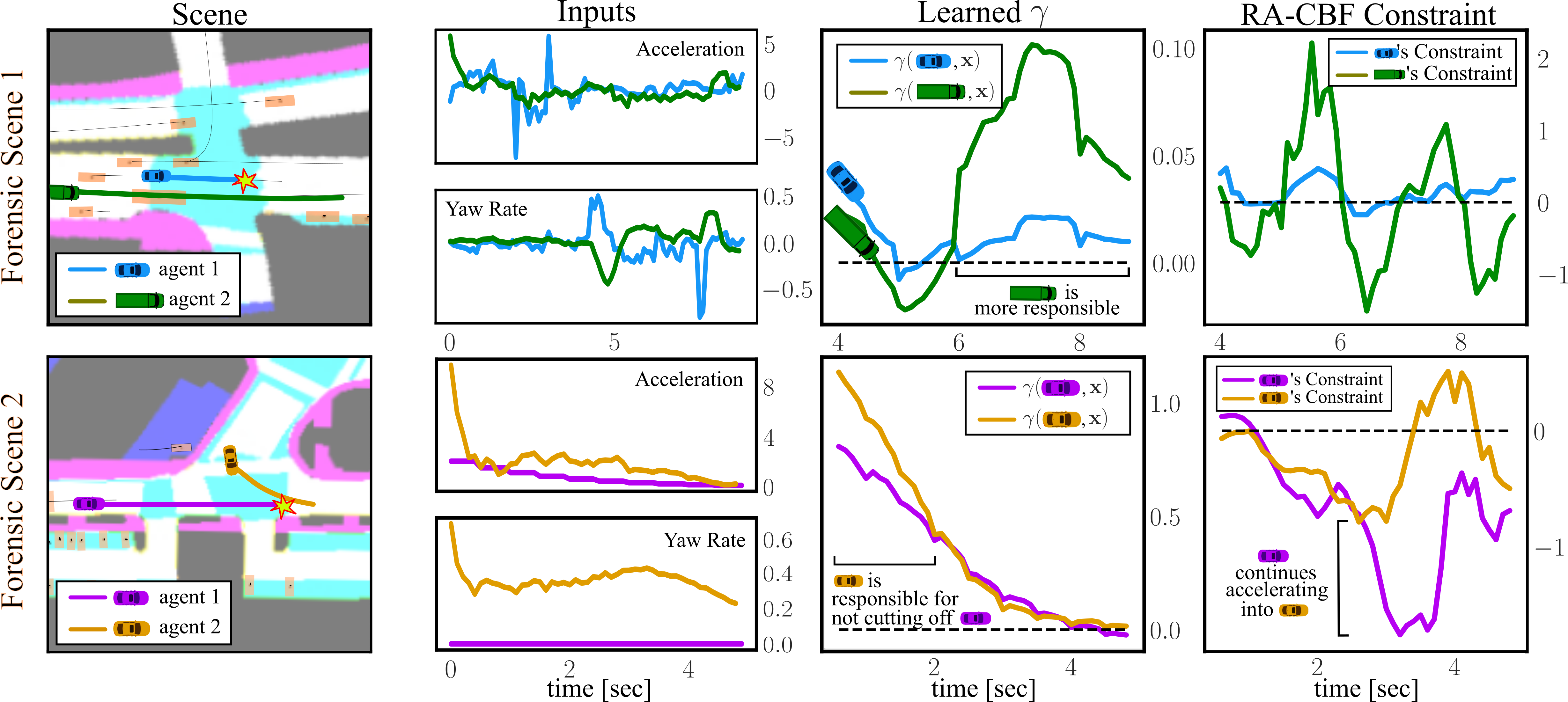}
    \caption{Crash scenes for forensic analysis. In both rows the plots from left to right display: the scene and trajectories, the inputs for agents 1 and 2, the learned responsibility allocations for each agent, and the values of their RA-CBF constraints \eqref{eq:ra-cbf_constraint}.}
    \label{fig:forensic}
    \vspace{-0.4cm}
\end{figure*}

\begin{figure}[t]
\vspace{-1em}
\begin{minipage}[t]{\linewidth}
\begin{table}[H]
\centering
Closed-Loop Simulation Results
\scriptsize
    \begin{tabular}{|c|c|c|c|}    
        \hline  & Worst-Case & Even-Split & Our Method  \\
         \hline Validation Constraint Violation & 43.99\% & 8.13\% &  9.51\%  \\
         \hline Closed-Loop Safety Violation &  0.833\% & 2.50\%  & 0.833\%  \\
         \hline Time Spent Off Road &  1.48\% & 0.59\% & 0.54\%  \\ 
         \hline Distance Covered Metric  & 290.21 &  309.21 & 307.84  \\
         \hline
    \end{tabular}
    \caption{Results for the closed-loop experiments.
    }
    \label{tbl:experiment_results}
\end{table}
\end{minipage}
\vspace{-0.5cm}
\end{figure}

\subsection{Learning Model}

The inputs of the responsibility allocation function are a semantic image as see in Fig. \ref{fig:resp_vis} and the relative vehicle states of agents $i$ and $j$. The image is processed by ResNet-18 \cite{he2015resnet} and the 256 dimensional output is concatenated to the vehicle states and processed by a multi-layer perceptron (MLP) with 2 hidden layers of size 128 and a single dimensional output.


The hyperparameters chosen were $\lambda_1 = 1$, $\lambda_2 = 10$, $\lambda_3 = 0.01$, $\alpha = 0.5$, $T = 1 $, $\underline{d}=0.4$,  $\ell_1=0.1$, $\ell_2 =0.01$ and $\theta_\textrm{max} = 100^\circ$ where $\ell_1$ and $\ell_2$ were the negative slopes of the MLP's leaky ReLU activation functions and $\theta_\textrm{max}$ is used to filter the dataset such that only interactions between vehicles whose headings are within $\pm \theta_\textrm{max}$ are considered. The parameter $\theta_\textrm{max}$ is necessary since our data does not include lane direction annotation. We note that this does limit the applicability of this network and plan to include lane direction information in future work.

The network was trained on the NuScenes Boston Seaport dataset. Example responsibilities generated by our learned model can be found in Fig. \ref{fig:resp_vis}. These figures show that our model conforms to the general intuition that the vehicle behind is more responsible than the vehicle in front for avoiding collisions between them, and the vehicle stopped at an intersection is responsible for not interfering with a vehicle already crossing the intersection.

\subsection{Closed-Loop Testing}
We use our RA-CBF framework with a learned responsibility allocation function as a safety-filter in closed-loop control
and simulate human-like driving using the Bi-Level Imitation for Traffic Simulation (BITS) model \cite{xu2022bits}. The ego agent follows \eqref{eq:optimal_controller} where $Q_i(\mb{x}_i, \mb{u}_i) =  \Vert \mb{k}_{\textrm{bits}}^{+}(\mb{x}_i) - \mb{u}_i \Vert^2$ and $\mb{k}_{\textrm{bits}}^{+}$ is the BITS controller with an additional 1 $\frac{\textrm{m}}{\textrm{sec}^2}$ acceleration added to generate irresponsible desired behavior that must be filtered to ensure safety. The RA-CBF constraint is applied for each pairwise vehicle $j$ and slack variables are used to ensure feasibility. We compare our method to the same controller with two other baseline constraints: (i) \textit{``Worst-Case''} constraint \eqref{eq:worst_case_cbf}, and (ii) \textit{``Even-Sharing''} constraint which is the RA-CBF constraint with $\gamma(i, \mb{x}) \equiv 0$. 

The closed-loop system was run in 120 scenarios sampled from NuScenes for 10 seconds at 10 Hz. Table \eqref{tbl:experiment_results} contains metrics comparing the controllers. The Worst-Case controller has the fewest safety violations (as expected), but worse compatibility with the expert demonstrations as indicated by the large constraint violation, smallest distance covered, and significant amount of time off of drivable surfaces. The Even-Sharing controller has fewer constraint violations on the validation data and the most distance covered, but allows for more collisions. Our method has a slightly higher number of constraint violations, but achieves a better safety-performance trade-off.


\subsection{Forensic Analysis}

In addition to closed loop control, the values of $\gamma$ and the RA-CBF constraint provide useful insight when performing forensic analysis on unsafe driving behaviors. To demonstrate this, we analyze the two collision scenarios shown in Fig. \ref{fig:forensic}.  

\noindent\textbf{Forensic Scenario 1:} In the first scene, agent 1 is crossing an intersection as agent 2 approaches from behind. For more than 2 seconds preceding the crash, $\gamma(2, \mb{x}) > \gamma(1, \mb{x})$ indicating that agent 2 should have taken a greater share of the responsibility. However, agent 2 has several large violations of the RA-CBF constraint while agent 1 generally satisfies it. From this we interpret that agent 2 is responsible for the crash which aligns with our intuition.

\noindent \textbf{Forensic Scenario 2: } In the second scene, agent 1 is driving along a road that agent 2 is turning on to. At first, agent 2 is more responsible for not entering the lane in front of agent 1 $(i.e., \gamma(2, \mb{x}) > \gamma(1, \mb{x}) $ for $t < 2$). However, the responsibility allocation values then become similar as vehicle 2 enters the lane and accelerates. Throughout the scene, both agents violate safety, but agent 1 has much larger constraint violations in the moments preceding the crash due to its continued acceleration. This can be interpreted to indicate that although it was irresponsible of agent 2 to merge when it did, agent 1 is ultimately culpable.

We recognize that this analysis is subjective and one of many possible interpretations, but we believe that our method can provide useful insight when performing forensic analysis.




\ifthenelse{\boolean{separate}}{\clearpage}{} 

\section{Conclusion} \label{sec:conclusion}

We have presented Responsibility-Aware Control Barrier Functions (RA-CBFs) as a framework to learn and synthesize safe and responsible driving behaviors. RA-CBFs are designed to capture the asymmetric sharing of responsibility between multiple (human) agents and we present a method to learn context-dependent responsibility allocations from data. 
We then demonstrated the efficacy and utility of our approach using real-world driving data.
This work enables various exciting future directions which include incorporating explicit traffic rules into our responsibility-learning paradigm, comparing how responsibility allocations vary across geographical regions, and exploring other application domains such as crowd navigation. 

\clearpage







\ifthenelse{\boolean{separate}}{\clearpage}{} 





\ifthenelse{\boolean{separate}}{\clearpage}{} 











\balance
\bibliographystyle{IEEEtran}
\bibliography{cosner_main}


\end{document}